\title[Group-realizable multi-group learning by minimizing empirical risk]{Group-realizable multi-group learning by minimizing empirical risk}
\DeclarePairedDelimiterXPP\ind[1]{\mathds{1}}{\lbrace}{\rbrace}{}{#1} %
\DeclarePairedDelimiterX\eval[1]{\lbrace}{\rvert}{#1 \delimsize\rbrace} %
\DeclarePairedDelimiter\card{\lvert}{\rvert} %
\DeclarePairedDelimiter\del{\lparen}{\rparen} %
\DeclarePairedDelimiter\sbr{\lbrack}{\rbrack} %
\let\set\relax
\DeclarePairedDelimiter\set{\lbrace}{\rbrace} %
\DeclarePairedDelimiter\intoo{\lparen}{\rparen} %
\providecommand\given{}
\newcommand\SetSymbol[1][]{%
  \nonscript\:#1\vert
  \nonscript\:
  \mathopen{}}
\DeclarePairedDelimiterX\Set[1]\lbrace\rbrace{%
  \renewcommand\given{\SetSymbol[\delimsize]}
  #1
}
\DeclarePairedDelimiterX\Sbr[1]\lbrack\rbrack{%
  \renewcommand\given{\SetSymbol[\delimsize]}
  #1
}
\DeclarePairedDelimiterX\Del[1]\lparen\rparen{%
  \renewcommand\given{\SetSymbol[\delimsize]}
  #1
}
\def\ddefloop#1{\ifx\ddefloop#1\else\ddef{#1}\expandafter\ddefloop\fi}
\def\ddef#1{\expandafter\def\csname bf#1\endcsname{\ensuremath{\mathbf{#1}}}}
\def\ddef#1{\expandafter\def\csname bf#1\endcsname{\ensuremath{\boldsymbol{\csname #1\endcsname}}}}
\def\ddef#1{\expandafter\def\csname cal#1\endcsname{\ensuremath{\mathcal{#1}}}}
\DeclareMathOperator\err{err}
\DeclarePairedDelimiterXPP\conderr[1]{\err}\lparen\rparen{}{%
  \renewcommand\given{\SetSymbol[\delimsize]}
  #1
}
\newcommand\groups{\calG}
\newcommand\hypotheses{\calH}
\newcommand\concepts{\calC_{\groups,\hypotheses}}
\newcommand\inputs{\calX}
\newcommand\bits{\set{0,1}}
\newcommand\signs{\set{-1,1}}
\newcommand\groupdim{d_{\groups}}
\newcommand\hypdim{d_{\hypotheses}}
\newcommand\Ghypdim{d_{\groups,\hypotheses}}
\newcommand\ghypdim{d_{g,\hypotheses}}
\DeclareMathOperator\poly{poly}
\newcommand\NP{\ensuremath{\mathsf{NP}}}
\begin{document}

\maketitle

\begin{abstract}%
  The sample complexity of multi-group learning is shown to improve in the group-realizable setting over the agnostic setting, even when the family of groups is infinite so long as it has finite VC dimension.
  The improved sample complexity is obtained by empirical risk minimization over the class of group-realizable concepts, which itself could have infinite VC dimension.
  Implementing this approach is also shown to be computationally intractable, and an alternative approach is suggested based on improper learning.
\end{abstract}

\begin{keywords}%
  Multi-group learning, group-realizability, empirical risk minimization, sample complexity, computational complexity
\end{keywords}

\section{Introduction}

\emph{Multi-group learning}~\citep{rothblum2021multi} extends the basic framework of statistical learning to study the performance of predictive models within families of subpopulations.
Although multi-group learning can be cast as a special case of other learning frameworks that address subpopulation-level criteria~\citep[e.g.,][]{hebertjohnson2018multicalibration,kim2019multiaccuracy,dwork2021outcome,haghtalab2023unifying}, the goal of this work is to study a natural assumption within multi-group learning---\emph{group-realizability}---under which one might expect improved statistical efficiency and/or computational efficiency as compared to the general case.

In multi-group learning (for binary classification), subpopulations are specified by subsets of the input domain $\inputs$, and the learning objective concerns a family of (possibly overlapping) subpopulations $\groups \subseteq 2^{\inputs}$.
In particular, for a given collection of benchmark classifiers $\hypotheses \subseteq \signs^{\inputs}$ and an excess error rate bound $\epsilon \in \intoo{0,1}$, the learner seeks to construct a classifier $f \colon \inputs \to \signs$ using an i.i.d.~sample from a probability distribution $D$ over $\inputs \times \signs$ so that, with high probability,
\begin{equation}
  \conderr{f \given g} \leq \inf_{h \in \hypotheses} \conderr{h \given g} + \epsilon
  \quad \text{for each subpopulation $g \in \groups$} .
  \label{eq:multigroup}
\end{equation}
Above, $\conderr{f \given g} := \Pr_{(\bfx,\bfy) \sim D}\Sbr{ f(\bfx) \neq \bfy \given \bfx \in g }$ is the \emph{conditional error rate of a classifier $f$ given $g$} (defined whenever $g$ has non-zero mass under the marginal of $D$ over $\inputs$).
Importantly, each subpopulation $g$ may have a different optimal benchmark classifier $h_g^* \in \hypotheses$, and it is possible that these per-group optimal classifiers have disagreements $h_g^*(x) \neq h_{g'}^*(x)$ at points of intersection $x \in g \cap g'$.
Because of this, it is possible that no $f \in \hypotheses$ can satisfy \Cref{eq:multigroup}, and existing multi-group learning algorithms instead construct $f$ as an ensemble classifier involving functions from $\hypotheses$ and $\groups$.

\emph{Group-realizability} is the assumption on $(\groups,\hypotheses,D)$ that, for each $g \in \groups$, there is a benchmark classifier $h_g^* \in \groups$ such that $\conderr{h_g^* \given g} = 0$.
Under this assumption, the goal in multi-group learning is to construct a classifier $f$ such that $\conderr{f \given g} \leq \epsilon$ for each $g \in \groups$.
Group-realizability should be contrasted with the standard \emph{realizability} (or \emph{separability}) assumption on $(\hypotheses,D)$, i.e., the existence of $h^* \in \hypotheses$ such that $\Pr_{(\bfx,\bfy) \sim D}\sbr{ h^*(\bfx) \neq \bfy } = 0$.
Realizability implies group-realizability, but the reverse implication need not hold (unless, e.g., $\inputs \in \groups$).

A marked difference between the "realizable" and "agnostic" (i.e., non-realizable) settings in (single-group) statistical learning comes in the worst-case dependence on $\epsilon$ in the sample complexity: roughly $1/\epsilon$ versus $1/\epsilon^2$ (ignoring the dependence on $\hypotheses$).
A similar sample complexity difference also manifests in multi-group learning with and without the group-realizability assumption, as shown by \citet{tosh2021simple} for the case where the family of subpopulations $\groups$ is finite.
Moreover, the computational complexity is, roughly speaking, no worse than that of finding a consistent classifier in $\hypotheses$ for each subpopulation $g \in \groups$.
This is notable since, for some classes such as half-spaces, finding a consistent classifier can be done in polynomial-time, whereas finding a classifier in the class with approximately-minimal error rate on a sample may be computationally intractable~\citep{feldman2009agnostic,guruswami2009hardness}.

In this work, we show that the improved sample complexity (up to a logarithmic factor in $1/\epsilon$) for multi-group learning under group-realizability extends to the case where $\groups$ is infinite but has finite VC dimension.
To achieve this sample complexity guarantee, we introduce the class $\concepts \subseteq \signs^{\inputs}$ of \emph{group-realizable concepts}: the set of functions $c \colon \inputs \to \signs$ consistent with the group-realizability assumption. 
With this definition in hand, our main result follows simply from \emph{empirical risk minimization (ERM)} over $\concepts.$
In particular, there is no explicit or algorithmic regularization (in contrast to the algorithm of \citet{tosh2021simple}, which involves a form of aggregation).
Remarkably, it is possible for this class $\concepts$ to have infinite VC dimension, even when both $\groups$ and $\hypotheses$ have finite VC dimension.
ERM with infinite VC dimension classes is not generally an effective learning procedure in the standard statistical learning setup.
However, in our setup, the efficacy of ERM with such a class comes from the restricted sense in which the learned classifier is evaluated.
This is very simply captured using shattering coefficients.
The statistical efficiency of the $\concepts$ class is reminiscent of the near-optimality of ERM in realizable (single-group) statistical learning, where search over $\calH$ itself is sufficient for similar sample complexity guarantees.

Unfortunately, the rub with using ERM---which ultimately boils down to just finding a function $c \in \concepts$ consistent with the training data---is computational intractability: it generally involves solving an \NP-hard problem.
The hardness does not come from any potential intractability of optimizing over $\hypotheses$ or even enumerating $\groups$: our proof of hardness in \Cref{sec:computation} goes via a reduction to instances where $\groups$ has polynomial-size and optimization over $\hypotheses$ can be performed in polynomial-time.
Rather, the difficulty appears to come from even specifying a classifier in $\concepts$ when one can only refer to classifiers from $\hypotheses$ and subpopulations from $\groups$.
We show, in \Cref{sec:improper}, how "improper learning" can get around this intractability in certain cases.

\subsection*{Relation to prior works}

The literature on multi-group learning has focused primarily on the general (agnostic) setting, whether in the online setting~\citep{blum2020advancing,deng2024groupwise} or the batch setting~\citep{rothblum2021multi,tosh2021simple,rittler2023agnostic,deng2024multi}.
The work of \citet{tosh2021simple} provides a (batch) multi-group learning algorithm, called "Prepend", for families of subpopulations $\groups$ that may be infinite but have finite VC dimension.
Prepend---which is nearly identical to an algorithm proposed by \citet{globus2022algorithmic} in a somewhat related context---is efficient as long as it has access to an oracle for solving ERM-type optimization problems over $\hypotheses \times \groups$.
Such "oracle efficient" algorithms have a long history in many areas of learning theory~\citep[e.g.,][]{kalai2005efficient,kakade2005batch,dasgupta2007general,dudik2011efficient,dann2018oracle,foster2020beyond,haghtalab2022oracle,wang2022adaptive,garg2024oracle}, including the closely related subject of subgroup fairness~\citep{kearns2018preventing}.

An important drawback of Prepend, however, is that its sample complexity is suboptimal: thr dependence on $\epsilon$ is roughly $1/\epsilon^3$.
(Here, for simplicity, we are omitting the dependence on $\groups$, $\hypotheses$, and the smallest group probability mass.)
Prepend can be specialized to the group-realizable setting, but its sample complexity remains suboptimal: roughly $1/\epsilon^2$.
(Another algorithm of \citet{kim2019multiaccuracy} based on multi-accuracy also has a suboptimal $\Omega(1/\epsilon^2)$ sample complexity under group-realizability.)
As mentioned before, \citet{tosh2021simple} do provide an algorithm (different from Prepend) with near-optimal dependence on $\epsilon$ in the sample complexity (in both the agnostic case and under group-realizability) for the case where $\groups$ is finite.
The sample complexity of their algorithm is linear in $\log(\card{\groups})$; the algorithm also explicitly enumerates $\groups$, and hence the computational complexity may be exponential in the sample size.
No other prior works on multi-group learning address the potential sample complexity improvements in the group-realizable setting, including works that view multi-group learning as a special case of other learning frameworks such as
multi-calibration~\citep{hebertjohnson2018multicalibration},
outcome indistinguishability~\citep{dwork2021outcome,rothblum2021multi},
and multi-objective learning~\citep{haghtalab2023unifying}.
Consequently, sample complexity guarantees obtained by reducing to these general frameworks are $1/\epsilon^2$ or worse.

Interestingly, our identification of $\concepts$ as the correct object of study leads to our main sample complexity result via a simple analysis. The introduction of this class of functions $\concepts$ does not appear in prior literature. Instead, prior algorithms in multi-group learning such as those presented in \cite{tosh2021simple} and \cite{rothblum2021multi} as noted above consider aggregation procedures over $\hypotheses$ and $\groups$ instead of explicitly defining the class of interest to search over. 

The \NP-hardness of the computational problem encountered by ERM over $\concepts$ is reminiscent of the hardness of proper learning in PAC learning~\citep[e.g.,][]{pitt1988computational}, although as discussed above, the nature of the hardness appears to be conceptually different.

\section{Setting}

Throughout, $\groups \subseteq \bits^{\inputs} \cong 2^{\inputs}$ refers to a family of subpopulations (a.k.a.~groups), and $\hypotheses \subseteq \signs^{\inputs}$ refers to a collection of benchmark classifiers (a.k.a.~hypotheses).
It will be convenient to regard a group $g \in \groups$ both as a function $g \colon \inputs \to \bits$ and as a subset $g \subseteq \inputs$.
For a class of functions $\calF$ defined over a domain $\calZ$, the $k$-th shattering coefficient is the largest possible number of $k$-tuples realized by $\calF$ on $k$ points from $\calZ$:
\begin{equation*}
  \calS_{\calZ}(\calF,k) := \sup_{z_1, \dots, z_k \in \calZ} \card{ \Set{\del{f(z_1), \dots, f(z_k)} \given f \in \calF } }
  .
\end{equation*}

A central contribution of this paper is the introduction of the class of \emph{group-realizable concepts} $\concepts$, which we define as follows.

\begin{definition}
  \label{def:group_realizable}
  Fix any family of groups $\groups \subseteq \bits^{\inputs}$ and any hypothesis class $\hypotheses \subseteq \signs^{\inputs}$ on the same input domain $\inputs$.
  The set of \emph{group-realizable concepts with respect to $(\groups,\hypotheses)$}, denoted by $\concepts \subseteq \signs^{\inputs}$, is the set of all functions $c \colon \inputs \to \signs$ such that, for each $g \in \groups$, there exists $h \in \hypotheses$ satisfying $c(x) = h(x)$ for all $x \in g$.
  Additionally, for a probability distribution $D$ on $\inputs \times \signs$, we say $(\groups,\hypotheses,D)$ satisfies \emph{group-realizability} if there exists $c^* \in \concepts$ such that $\Pr_{(\bfx,\bfy) \sim D}\sbr{ c^*(\bfx) = \bfy } = 1$.
\end{definition}

In this work, we only consider distributions $D$ over $\inputs \times \signs$ such that $(\groups,\hypotheses,D)$ satisfies group-realizability.
So we can equivalently specify $D$ by its marginal distribution $P$ over $\inputs$, and a group-realizable concept $c^* \in \concepts$.
The conditional error rate of a classifier $f \colon \inputs \to \signs$ given $g$ can therefore be written as
\begin{equation*}
  \conderr{f \given g} = \Pr_{\bfx \sim P}\Sbr{ f(\bfx) \neq c^*(\bfx) \given \bfx \in g } .
\end{equation*}
Our concern is the \emph{distribution-free} setting in the sense that we are interested in guarantees that hold for worst-case choices of $P$.
This mirrors the usual sense in which standard PAC learning is regarded as distribution-free even under realizability~\citep{valiant1984theory}.
Our aim is to achieve complexity guarantees that are comparable to those achievable in the standard setting when $\groups = \set{ \inputs }$. Formally, our goal is to furnish a classifier $f \colon \inputs \to \signs$ whose conditional-error rate on each group is small, captured by the following definition.

\begin{definition}
For any $(\groups, \hypotheses, D)$ satisfying group-realizability (Definition \ref{def:group_realizable}), a classifier $f \colon \inputs \to \signs$ achieves \emph{group-realizable multi-group learning} if, for all $(\epsilon, \delta) \in (0, 1)$,
\begin{equation*}
    \conderr{f \given g} \leq \epsilon \quad \text{for all } g \in \groups
\end{equation*}
with probability $1 - \delta$ over the i.i.d.~training examples $(\bfx_1, \dots, \bfx_n) \sim P^n.$
\end{definition}

Specifying a group-realizable concept $c \in \concepts$ typically begins by specifying an assignment $\groups \ni g \mapsto h_g \in \hypotheses$ of hypotheses to groups.
Note that an assignment of hypotheses to groups for which there are "disagreements" (e.g., $h_g(x) \neq h_{g'}(x)$ for some $x \in g \cap g'$) does not directly yield a well-defined classifier.
In this work, we consider the following options to get a valid classifier:
\begin{enumerate}
  \item Ensure the chosen $h_g$'s have no disagreements, i.e., whenever any two groups $g, g' \in \groups$ have a non-empty intersection, we have $h_g(x) = h_{g'}(x)$ for all $x \in g \cap g'$, so the assignment corresponds to some $c \in \concepts$.
    The sample and computational complexities of this approach are explored in \Cref{sec:sample,sec:computation}.

  \item Reconcile disagreements among the $h_g$'s; a data-driven approach is given in \Cref{sec:improper}.
    This approach is not guaranteed to yield a concept from $\concepts$.

\end{enumerate}

Even when $\groups$ and $\hypotheses$ have finite VC dimension, it is possible for $\concepts$ to have infinite VC dimension, as the following \namecref{prop:grouprealizableexample} shows.

\begin{proposition}
  \label{prop:grouprealizableexample}
  There exists $\groups \subseteq \bits^{\inputs}$ and $\hypotheses \subseteq \signs^{\inputs}$ on the same input domain $\inputs$, each with finite VC dimension, such that $\concepts$ has infinite VC dimension.
\end{proposition}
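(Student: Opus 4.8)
The plan is to make the family of groups so small that group-realizability imposes no constraint at all, which forces $\concepts$ to be the class of \emph{all} sign functions on an infinite domain.

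Concretely, I would take $\inputs$ to be any infinite set, let $\groups = \Set{ \set{x} \given x \in \inputs }$ be the family of all singletons, and let $\hypotheses = \set{h_1, h_{-1}}$ consist of the two constant functions $h_1 \equiv 1$ and $h_{-1} \equiv -1$. The first step is to identify $\concepts$ exactly: for any $c \colon \inputs \to \signs$ and any group $g = \set{x}$, the constant hypothesis $h_{c(x)}$ agrees with $c$ on $g$, so the condition of \Cref{def:group_realizable} holds for every $g \in \groups$; hence $\concepts = \signs^{\inputs}$. The second step is three short VC dimension checks: (i) $\groups$ has VC dimension $1$, since a single point $x$ is shattered ($\set{x}$ contains it, and $\set{x'}$ with $x' \neq x$ omits it) while no singleton contains two distinct points, so no $2$-point set is shattered; (ii) $\hypotheses$ has VC dimension $1$, since $\card{\hypotheses} = 2$ caps it at $1$ and the two constants realize both labelings of any one point; (iii) $\signs^{\inputs}$ has infinite VC dimension, since it shatters every finite subset of the infinite set $\inputs$. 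Chaining (i)--(iii) completes the argument.

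I do not expect any genuine obstacle here. The one thing to get right is the tension the construction must resolve: $\hypotheses$ has to be globally simple (finite VC dimension) and yet, restricted to each group, rich enough to realize every labeling of that group --- which is compatible precisely because the groups are singletons, so being "rich on each group" costs nothing. If a witness with less degenerate $\groups$ and $\hypotheses$ is desired, the same mechanism survives: take $\inputs$ to be the real line, $\hypotheses$ the class of threshold classifiers ($-1$ to the left of a threshold and $+1$ to the right), which has VC dimension $1$, and $\groups$ an infinite family of pairwise disjoint intervals, which also has VC dimension $1$. Restricting an appropriate threshold classifier to one of these intervals can be made identically $+1$ or identically $-1$, so $\concepts$ still contains every function that is constant on each interval; picking one point per interval, such functions shatter arbitrarily many points, so $\concepts$ again has infinite VC dimension.
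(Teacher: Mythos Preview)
Your proposal is correct and uses essentially the same construction as the paper: an infinite $\inputs$, $\groups$ the family of singletons, and $\hypotheses$ the two constant functions, with the same VC dimension checks leading to $\concepts = \signs^{\inputs}$. Your alternative threshold/interval construction is a nice bonus but not needed for the proposition as stated.
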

\begin{proof}
  Let $\inputs$ be any set of infinite cardinality (e.g., the integers).
  Let $\hypotheses$ consist of just the "constant $1$" function and the "constant $-1$" function.
  This class is finite (and has VC dimension $1$).
  Let $\groups = \Set{ \set{x} \given x \in \inputs }$ be the family of singleton sets.
  This class has VC dimension $1$.
  The groups in $\groups$ are disjoint, so $\concepts$ contains all $\signs$-valued functions on $\inputs$.
  Therefore, finite subsets of $\inputs$ of all sizes are shattered by $\concepts$,  which means that $\concepts$ has infinite VC dimension.
\end{proof}

The set of group-realizable concepts $\concepts$ is not an appropriate class to use in the agnostic setting, where $(\groups,\hypotheses,D)$ may fail to satisfy group-realizability.
For instance, suppose $\groups = \set{ g_1, g_2 }$ with $g_1 \cap g_2 \neq \emptyset$, and $\hypotheses = \set{ x \mapsto -1, x \mapsto 1 }$ contains just the constant $-1$ and constant $1$ hypotheses.
Then $\concepts = \set{ x \mapsto -1, x \mapsto 1}$ as well.
However, consider $(\bfx,\bfy) \sim D$ with $P(g_1 \setminus g_2) = P(g_1 \cap g_2) = P(g_2 \setminus g_1) = 1/3$ (where $P$ is the marginal distribution of $\bfx$),
$\Pr\Sbr{ \bfy = 1 \given \bfx \in g_1 \setminus g_2 } = 1/2$, 
$\Pr\Sbr{ \bfy = 1 \given \bfx \in g_1 \cap g_2 } = 2/3$, 
$\Pr\Sbr{ \bfy = 1 \given \bfx \in g_2 \setminus g_1 } = 0$.
In this case, the best constant predictor for $g_1$ is $x \mapsto 1$, but the best constant predictor for $g_2$ is $x \mapsto -1$.

\section{Sample complexity}
\label{sec:sample}

Our main sample complexity result is a consequence of the following \namecref{thm:sample}.

\begin{theorem}
  \label{thm:sample}
  Let $P$ be a probability distribution on $\inputs$, let $\groups \subseteq \bits^{\inputs}$ be any family of groups on $\inputs$, and let $\hypotheses \subseteq \signs^{\inputs}$ be any hypothesis class on $\inputs$.
  Fix any $c^* \in \concepts$, and let $(\bfx_1,\dotsc,\bfx_n) \sim P^n$ and $\bfS = ((\bfx_i,c^*(\bfx_i)))_{i \in [n]}$.
  \begin{enumerate}
    \item For any $g \in \groups$ and any $\delta \in \intoo{0,1}$, with probability at least $1-\delta$, every $c \in \concepts$ consistent with $\bfS$ has
      \begin{equation}
        \Pr_{\bfx \sim P}\sbr{ c(\bfx) \neq c^*(\bfx) \wedge \bfx \in g }
        \leq \frac{4\del*{ \log\binom{2n}{\leq \ghypdim} + \log\del*{\tfrac4\delta} }}{n} .
        \label{eq:foreachbound}
      \end{equation}
      Above, $\ghypdim$ is the VC dimension of $\hypotheses$ restricted to $g \subseteq \inputs$.

    \item For any $\delta \in \intoo{0,1}$, with probability at least $1-\delta$, every $c \in \concepts$ consistent with $\bfS$ has
      \begin{equation}
        \Pr_{\bfx \sim P}\sbr{ c(\bfx) \neq c^*(\bfx) \wedge \bfx \in g }
        \leq \frac{4\del*{ \log\binom{2n}{\leq \groupdim} + \log\binom{2n}{\leq \sup_{g \in \groups} \ghypdim} + \log\del*{\tfrac4\delta} }}{n}
        \quad \forall g \in \groups .
        \label{eq:forallbound}
      \end{equation}
      Above, $\ghypdim$ is the VC dimension of $\hypotheses$ restricted to $g \subseteq \inputs$, and $\groupdim$ is the VC dimension of $\groups$.
  \end{enumerate}
\end{theorem}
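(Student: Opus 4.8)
The plan is to avoid reasoning about the class $\concepts$ directly — it can have infinite VC dimension by \Cref{prop:grouprealizableexample} — and instead apply the standard double-sampling (realizable uniform convergence) bound to the much smaller class of \emph{disagreement regions} that actually control the left-hand sides of \cref{eq:foreachbound} and \cref{eq:forallbound}.

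The first step is the reduction. For $g \in \groups$ and $c \in \concepts$, write $B_{c,g} := \Set{x \in \inputs \given c(x) \neq c^*(x)} \cap g$, so the quantity to be bounded is precisely $P(B_{c,g})$. Consistency of $c$ with $\bfS$ means $c(\bfx_i) = c^*(\bfx_i)$ for every $i$, which forces $B_{c,g}$ to contain none of $\bfx_1,\dots,\bfx_n$. So, setting $\calA_g := \Set{B_{c,g} \given c \in \concepts}$ for part~1 (with $g$ fixed) and $\calA := \bigcup_{g\in\groups}\calA_g$ for part~2, and writing $\calA$ generically for whichever of these is in play, both conclusions follow from the textbook realizable bound: with probability at least $1-\delta$, every $A \in \calA$ that avoids all of $\bfx_1,\dots,\bfx_n$ has $P(A)\le\tfrac{2}{n}\log_2\tfrac{2\,\calS_{\inputs}(\calA,2n)}{\delta}$. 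Matching the constants $4$ and $4/\delta$ in the displayed bounds is then arithmetic, so the real work is the estimate on $\calS_{\inputs}(\calA,2n)$.

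Two ingredients drive the shattering-coefficient estimate: every $c \in \concepts$ has $c|_g \in \hypotheses|_g$, a class of VC dimension $\ghypdim$; and $c^*$ is \emph{fixed}. For part~1 I would fix $g$ and points $Z = \set{z_1,\dots,z_k}$: each trace $B_{c,g}\cap Z$ equals $\Set{z \in g\cap Z \given h(z) \neq c^*(z)}$ for some $h \in \hypotheses$, and since the map $\phi\mapsto\Set{z\in g\cap Z\given\phi(z)\ne c^*(z)}$ is injective, the number of traces is at most the number of labelings of $g\cap Z$ by $\hypotheses|_g$, hence at most $\binom{k}{\leq\ghypdim}$ by Sauer–Shelah; so $\calS_{\inputs}(\calA_g,2n)\le\binom{2n}{\leq\ghypdim}$, which gives \cref{eq:foreachbound}. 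For part~2 I would first record $g\cap Z$ (at most $\calS_{\inputs}(\groups,k)\le\binom{k}{\leq\groupdim}$ possibilities), and then for each fixed value $I = g\cap Z$ rerun the part~1 count, using that $c^*|_I$ depends only on $I$ and not on which $g$ produced it, and that $\mathrm{VCdim}(\hypotheses|_I)\le\mathrm{VCdim}(\hypotheses|_g)=\ghypdim\le\sup_{g\in\groups}\ghypdim$ because $I\subseteq g$. Multiplying yields $\calS_{\inputs}(\calA,2n)\le\binom{2n}{\leq\groupdim}\binom{2n}{\leq\sup_{g\in\groups}\ghypdim}$, hence \cref{eq:forallbound}. (Part~1 is formally part~2 for the singleton family $\set{g}$, whose group term is trivial, but stating it separately is cleaner.)

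The one genuinely delicate point — everything else is bookkeeping — is obtaining a \emph{single} power of $\binom{2n}{\leq\sup_{g\in\groups}\ghypdim}$ rather than a square: the disagreement region of two freely varying hypotheses on $g$ would cost $\calS_{\inputs}(\hypotheses|_g,2n)^2$. Keeping $c^*$, hence $c^*|_g$, fixed is exactly what avoids this, since only the $c|_g$ side varies and XOR against a fixed function is a bijection that does not inflate the count. The subsidiary routine fact to record explicitly is that a subset of $g$ shattered by $\hypotheses$ is shattered by $\hypotheses|_g$, so restricting the domain cannot increase VC dimension; this is what licenses writing the bound with $\sup_{g\in\groups}\ghypdim\ (\le\hypdim)$ in place of $\hypdim$.
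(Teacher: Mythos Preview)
Your proposal is correct and is essentially the same argument as the paper's. Both proofs (i) pass from $\concepts$ to the class of ``disagreement-on-$g$'' indicators $x\mapsto g(x)\,\ind{c(x)\neq c^*(x)}$ (your sets $B_{c,g}$), (ii) use the defining property of $\concepts$ that $c|_g=h|_g$ for some $h\in\hypotheses$ together with the fixedness of $c^*$ to bound the shattering coefficient by $\binom{2n}{\leq\ghypdim}$ (and by the product with $\binom{2n}{\leq\groupdim}$ in part~2), and (iii) finish with a standard realizable uniform-convergence inequality applied to sets of empirical measure zero. The only cosmetic difference is that the paper invokes the relative-deviation form of the Vapnik--Chervonenkis lemma (\Cref{lem:uniform}) and specializes it to $P_n f=0$, whereas you cite a consistent-hypothesis bound directly; your observation that the resulting constants can be matched to $4$ and $4/\delta$ by weakening is correct.
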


Although the guarantees of \Cref{thm:sample} are not stated in terms of the conditional error rates $\conderr{c \given g}$, such error rates can be gotten by dividing by $P(g) = \Pr_{\bfx \sim P}\sbr{ \bfx \in g}$.
After doing so, the right-hand sides in \Cref{eq:foreachbound,eq:forallbound} have a denominator of $n P(g)$, which can be interpreted as the expectation of number of training examples $N_g$ from $g$.
Therefore, the conditional error rate decreases roughly as $1/N_g$, which should be contrasted to the $1/\sqrt{N_g}$ rates in the agnostic case~\citep{tosh2021simple}.

Furthermore, to obtain sample size requirements for multi-group learning, we can simply "solve for $n$" to make the right-hand side equal to (or bounded above by) the target excess error rate bound $\epsilon$.
For instance, from \Cref{eq:forallbound}, we derive the sample size requirement
\begin{equation}
  n \geq C \cdot \frac{(\Ghypdim + \groupdim) \log(1/\gamma\epsilon) + \log(1/\delta)}{\gamma\epsilon}
  ,
  \label{eq:sample_complexity_vc}
\end{equation}
where $C>0$ is some absolute constant,
$\Ghypdim := \sup_{g \in \groups} \ghypdim$,
and $\gamma$ is a lower-bound on $\Pr_{\bfx \sim P}\sbr{ \bfx \in g}$ that holds for all $g \in \groups$.\footnote{%
  The dependence on $\gamma$ can be alleviated if one is willing to consider non-uniform error bounds that scale (inversely) with $P(g)$ for group $g$; see discussion of~\citet{tosh2021simple} on this matter.
  This aspect is common to all multi-group learning algorithms.%
}
Note that $\Ghypdim$ is always at most the VC dimension $\hypdim$ of $\hypotheses$, and is equal to $\hypdim$ whenever $\inputs \in \groups$.
For comparison, the sample size requirement of the algorithm of \citet{tosh2021simple} in the group-realizable setting is
\begin{equation}
  n \geq C \cdot \frac{\hypdim \log(1/\gamma\epsilon) + \log(\card{\groups}) + \log(1/\delta)}{\gamma\epsilon}
  .
  \label{eq:sample_complexity_cardinality}
\end{equation}
(See \Cref{sec:improper} for more discussion.)
In both cases, when $\groups = \set{ \inputs }$, the sample size requirement reduces to the usual sample complexity for ERM in the realizable setting (which is within a factor of $\log(1/\epsilon)$ from optimal). Because multi-group learning generalizes classical (single-group) realizable learning when $\calG = \set{\inputs}$, the classical lower bound of $\Omega(\hypdim/\epsilon)$ of \cite{blumer1989learnability} demonstrates that our dependence on $\epsilon$ is tight up to the $\log(1/\epsilon)$ factor.

The difference between \Cref{eq:sample_complexity_vc} and \Cref{eq:sample_complexity_cardinality} manifests for classes where $\log(\card{\groups}) \gg \groupdim \log(1/\gamma\epsilon)$.
Perhaps more interesting, however, is that \Cref{eq:sample_complexity_vc} is achieved via ERM over $\concepts$ without any explicit regularization, which stands in contrast to the algorithm of \citet{tosh2021simple}, which explicitly uses aggregation.

In the first part of \Cref{thm:sample} (specifically \Cref{eq:foreachbound}), we see that there is no dependence on $\groups$ whatsoever.
The guarantee in \Cref{eq:foreachbound} holds with probability $1-\delta$ for each group $g \in \groups$, but not for all groups simultaneously.
This is relevant in cases where the downstream evaluation is ultimately based only on performance in a single group, but the identity of that group is not known at the training time.

The proof of (each part of) \Cref{thm:sample} is a simple consequence of the following \namecref{lem:uniform}.
In \Cref{lem:uniform}, we use the standard shorthand $P f := \mathbb{E}_{\bfx \sim P}[f(\bfx)]$ and $P_n f := \frac{1}{n} \sum_{i = 1}^n f(\bfx_i)$ for random variables and $(\bfx_1, \dots, \bfx_n) \sim P^n$.

\begin{lemma}[\citealp*{vapnik1971uniform}]
  \label{lem:uniform}
  Let $\calF$ be a family of measurable functions $f \colon \calZ \to \bits$, and let $\delta \in \intoo{0,1}$.
  Let $\alpha_n = (4/n) \ln (4\calS_{\calZ}(\calF, 2n)/\delta)$, where $\calS_{\calZ}(\calF,k)$ is the $k$-th shattering coefficient for the class $\calF$.
  Let $P$ be any distribution over $\calZ$, and let $P_n$ be the empirical distribution on an i.i.d.~sample of size $n$ from $P$.
  With probability at least $1 - \delta$, for all $f \in \calF$:
  \begin{equation*}
    \frac{P f - P_n f}{\sqrt{P f}} \leq \sqrt{\alpha_n}
  \end{equation*}
\end{lemma}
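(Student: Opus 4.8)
The plan is to recognize this as the classical one-sided Vapnik--Chervonenkis inequality for relative deviations, and to prove the equivalent exponential tail bound
\[
  \Pr\Sbr*{ \sup_{f \in \calF} \frac{Pf - P_n f}{\sqrt{Pf}} > \epsilon } \;\leq\; 4\,\calS_{\calZ}(\calF, 2n)\, \exp\del*{-\tfrac{n\epsilon^2}{4}} \qquad\text{for every } \epsilon > 0
\]
(with the convention $0/0 = 0$): substituting $\epsilon = \sqrt{\alpha_n}$ makes the right-hand side equal to $\delta$, which is exactly the lemma. When $n\epsilon^2$ is below a fixed absolute constant the right-hand side is at least $1$ and there is nothing to prove, so I may assume $n\epsilon^2$ is bounded below. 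It is also worth recording at the outset that the event being bounded forces $Pf > \epsilon^2$ for any witnessing $f$, since $P_n f \geq 0$.

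First I would introduce an independent ``ghost'' sample $\bfx_1', \dots, \bfx_n' \sim P^n$ with empirical distribution $P_n'$ and symmetrize, bounding the tail above by an $O(1)$ multiple of \(\Pr\Sbr*{ \sup_{f} (P_n' f - P_n f)/\sqrt{(P_n f + P_n' f)/2} > \epsilon }\); the pooled empirical mean appears in the denominator because, unlike $Pf$, it is invariant under exchanging $\bfx_i \leftrightarrow \bfx_i'$, which the next step needs. Conditioning on the first sample and on a witness $f^*$, the count $n P_n' f^*$ is $\mathrm{Binomial}(n, Pf^*)$, and a Chebyshev (equivalently, binomial-median) estimate shows that with probability bounded below by a positive constant over the ghost sample, $P_n' f^*$ falls in a range that makes $f^*$ a witness for the pooled-normalized event as well; the reciprocal of that constant is the factor pulled out front. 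This is the step I expect to be the main obstacle: ordinary symmetrization controls $Pf - P_n f$, but here the normalization $\sqrt{Pf}$ is not observable, and one must transfer the event to the pooled empirical normalization \emph{without} degrading the deviation level $\epsilon$ (hence the exponent $n\epsilon^2/4$) by more than a constant factor in front. This is precisely where the lower bound on $n\epsilon^2$ and the inequality $Pf^* > \epsilon^2$ enter, ensuring that the binomial fluctuation of order $\sqrt{Pf^*/n}$ is dominated by the margin $\epsilon\sqrt{Pf^*}$.

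Next I would condition on the multiset of the $2n$ pooled points. Conditionally, the split into ``real'' and ``ghost'' points is exchangeable, hence has the law produced by independently swapping each pair $(\bfx_i, \bfx_i')$ with probability $1/2$; writing $\sigma_i \in \signs$ for the swap signs, one gets $P_n' f - P_n f = \tfrac1n \sum_{i=1}^n \sigma_i \del{f(\bfx_i') - f(\bfx_i)}$, while $(P_n f + P_n' f)/2$ is swap-invariant and therefore frozen. As $f$ ranges over $\calF$, its restriction to these $2n$ points takes at most $\calS_{\calZ}(\calF, 2n)$ distinct value-patterns, so a union bound reduces to one fixed pattern. For a fixed pattern, set $d_i := f(\bfx_i') - f(\bfx_i)$, let $k$ be the number of indices $i$ with $d_i \neq 0$, and let $m := \sum_{i=1}^n \del{f(\bfx_i) + f(\bfx_i')}$, so that the frozen denominator equals $\sqrt{m/(2n)}$; the key observation is that every index $i$ with $d_i \neq 0$ has $f(\bfx_i) + f(\bfx_i') = 1$ and thus contributes $1$ to $m$, so $k \leq m$. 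Since $\sum_i \sigma_i d_i$ is then a sum of $k$ independent symmetric $\pm 1$ variables, Hoeffding's inequality gives
\[
  \Pr\Sbr*{ \tfrac1n \sum_{i} \sigma_i d_i > \epsilon\sqrt{m/(2n)} } \;\leq\; \exp\del*{-\tfrac{n\epsilon^2 m}{4k}} \;\leq\; \exp\del*{-\tfrac{n\epsilon^2}{4}} .
\]
Undoing the two rounds of conditioning, taking the union bound over the $\leq \calS_{\calZ}(\calF, 2n)$ patterns, and reattaching the constant from the symmetrization step yields the displayed tail bound (with the constant $4$ not tight), hence the lemma. Since the whole argument is classical --- it goes back to \citet{vapnik1971uniform} --- an acceptable alternative is to cite it directly, or to replace the symmetrization step by the now-standard ``peeling'' over dyadic bands of $Pf$, which here needs only finitely many bands precisely because $Pf > \epsilon^2$.
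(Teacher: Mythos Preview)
The paper does not prove this lemma at all: it is stated with attribution to \citet{vapnik1971uniform} and then invoked as a black box in the proof of \Cref{thm:sample}. Your own closing remark---that ``an acceptable alternative is to cite it directly''---is exactly what the paper does.

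Your sketch is nonetheless a faithful outline of the classical relative-deviation argument (symmetrization to a pooled-normalized ghost event, permutation/swap randomization, union bound over the $\calS_{\calZ}(\calF,2n)$ patterns, then Hoeffding using $k \leq m$). The one place I would flag as genuinely delicate is the symmetrization step: you assert that the pooled-normalized event can be reached at the \emph{same} level $\epsilon$ with only a constant blown up in front, and you correctly identify this as the main obstacle, but your justification (``Chebyshev/binomial-median'' on $P_n' f^*$) is compressed to the point of being a pointer rather than an argument. In the original treatment one has to be a bit careful that the ghost sample simultaneously makes $P_n' f^* - P_n f^*$ large enough \emph{and} keeps $(P_n f^* + P_n' f^*)/2$ not too far above $Pf^*$, and this is where both the lower bound on $n\epsilon^2$ and the observation $Pf^* > \epsilon^2$ get used. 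If you intend to include a self-contained proof rather than cite, that step would need to be written out; otherwise the paper's choice of simply citing is the cleaner route, since the contribution of the paper lies entirely in the choice of $\calF$ and the shattering-coefficient bounds, not in the concentration lemma itself.
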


\begin{proofof}{\Cref{thm:sample}}
  For the first part of the claim, we let
  \begin{equation*}
    \calF := \Set{ x \mapsto g(x) (c \triangle c^*)(x) \given c \in \concepts } ,
  \end{equation*}
  where $(c\triangle c^*)(x) = \ind{c(x) \neq c^*(x)}$.
  Consider any $2n$ points $X := (x_1,\dotsc,x_{2n})$ from $\inputs$, so
  \begin{equation*}
    \calF \rvert_X
    = \Set*{ (f(x_i))_{i \in [2n]} \given f \in \calF }
    = \Set*{ (g(x_i)(c\triangle c^*)(x_i))_{i \in [2n]} \given c \in \concepts } .
  \end{equation*}
  By definition of $\concepts$, each vector $v \in \calF \rvert_X$ is equivalent to $(g(x_i)(h\triangle c^*)(x_i))_{i \in [2n]}$ for some $h \in \hypotheses$.
  Therefore,
  \begin{equation*}
    \card{\calF \rvert_X} \leq \calS_g(\hypotheses,2n)
    \leq \binom{2n}{\leq \ghypdim}
  \end{equation*}
  by Sauer's lemma.
  Since this holds for all choices of $x_1,\dotsc,x_{2n} \in \inputs$, the above bound also holds for $\calS_{\inputs}(\calF,2n)$. To finish the proof, we use a standard manipulation on \Cref{lem:uniform}. For any non-negative numbers $A, B, C$, we have that $A \leq B + C \sqrt{A}$ implies $A \leq B + C^2 + \sqrt{B}C$, which results in
  \begin{equation*}
    P f \leq P_n f + 2 \sqrt{P_n f \frac{\log \calS_{\inputs}(\calF, 2n) + \log(4/\delta)}{n}} + 4 \frac{\log \calS_{\inputs}(\calF, 2n) + \log(4/\delta)}{n}
  \end{equation*}
  for all $f \in \calF$. We assume that $c \in \concepts$ is consistent with $\bfS$, so noting that $f(x) = g(x)(c\triangle c^*)(x)$ implies that $P_n f = 0.$ Therefore,
  \begin{equation*}
    P f \leq 4 \frac{\log \calS_{\inputs}(\calF, 2n) + \log(4/\delta)}{n}
  \end{equation*}
  and the result follows from plugging in the bound on $\calS_g(\calH, 2n)$ established above.

  For the second claim, we just change the class $\calF$ to
  \begin{equation*}
    \calF := \Set{ x \mapsto g(x) (c \triangle c^*)(x) \given g \in \groups, c \in \concepts }.
  \end{equation*}
  Again, for any $2n$ points $X := (x_1,\dotsc,x_{2n})$ from $\inputs$,
  \begin{equation*}
    \calF \rvert_X
    = \Set*{ (g(x_i)(c\triangle c^*)(x_i))_{i \in [2n]} \given g \in \groups, c \in \concepts } .
  \end{equation*}
  By definition of $\concepts$, each vector $v \in \calF \rvert_X$ is equivalent to $(g(x_i)(h\triangle c^*)(x_i))_{i \in [2n]}$ for some $g \in \groups$ and some $h \in \hypotheses$.
  Each such vector is obtained (via component-wise product) from a vector of the form $(g(x_i))_{i \in [2n]}$ for some $g \in \groups$, and a vector of the form $(h(x_i))_{i \in [2n], x_i \in g}$ for some $h \in \hypotheses$.
  Therefore,
  \begin{equation*}
    \card{\calF \rvert_X}
    \leq \calS_{\inputs}(\groups,2n) \cdot \sup_{g \in \groups} \calS_g(\hypotheses,2n)
    \leq \binom{2n}{\leq \groupdim} \cdot \sup_{g \in \groups} \binom{2n}{\leq \ghypdim} .
  \end{equation*}
  The rest is the same as the proof for the first part.
\end{proofof}

In the proof of \Cref{thm:sample}, we see that the full "richness" of $\concepts$ is never encountered because the "mistake behaviors" that are counted are always restricted to individual groups.
On any group $g \in \groups$, the behavior of a group-realizable concept $c \in \concepts$ is determined by some hypothesis $h_g \in \hypotheses$, so the number of behaviors is limited if $\groups$ and $\hypotheses$ have finite VC dimension.
This kind of argument is reminiscent of the concept of computational indistinguishability from cryptography, which has recently been adopted in the context of learning criteria such as multi-calibration~\citep{hebertjohnson2018multicalibration} and outcome indistinguishability~\citep{dwork2021outcome}.
In our case, the "adversary" that tries to find a fault in a learner's classifier makes an appearance only in the analysis, rather than explicitly in an algorithm that, say, simulates game dynamics~\citep[e.g.,][]{haghtalab2023unifying}.

\section{Computational complexity}
\label{sec:computation}

Even though ERM with $\concepts$ has good sample complexity in the group-realizable setting, it may be difficult to implement, even in cases where ERM is easy to implement with $\hypotheses$ itself (and where the size of the family of groups is small).
Specifically, we show that given a succinct description of $\groups$ and $\hypotheses$, along with a labeled dataset $S$, it is \NP-hard to decide if there is a concept in $\concepts$ that is consistent with $S$, even if provided an oracle that returns a consistent hypothesis from $\hypotheses$ for any given labeled dataset (should one exist).

We give a reduction from the \NP-complete decision problem \textsf{ONE-IN-THREE 3SAT}~\citep{schaefer1978complexity,garey1979computers}:
Given a $3$-CNF formula $\phi$, decide if there is a truth assignment to the variables such that each clause in $\phi$ has exactly one literal that evaluates to "true".

\paragraph{Overview of the reduction.}

Given a $3$-CNF formula, we construct a group family with one group per clause, and a hypothesis class where hypotheses correspond to truth assignments.
The hypothesis class is a disjoint union of clause-specific (i.e., group-specific) hypothesis classes, where the hypotheses for a particular clause are those that correspond to all possible truth assignments that make \emph{exactly} one literal in the clause evaluate to "true".
Recall that to specify a concept in $\concepts$, we should choose a hypothesis in $\hypotheses$ for each $g \in \groups$.
Ensuring that the choice of hypothesis for $g \in \groups$ comes from the $g$-specific hypothesis class is handled by introducing additional points in the input domain/groups, and extending the behavior of the hypotheses and constructing the labeled dataset $S$ in a natural way.

\paragraph{Notation.}

For a literal $l$, let $v(l)$ denote the variable $x$ such that $l \in \set{x,\neg x}$, and let
\begin{equation*}
  y(l) := \begin{cases}
    +1 & \text{if $l = v(l)$} \\
    -1 & \text{if $l = \neg v(l)$}
  \end{cases}
\end{equation*}
denote the "polarity label" of that literal.

\paragraph{Reduction.}

Let $\phi$ be a $3$-CNF formula over variables $x_1,\dotsc,x_n$ with clauses $C_1,\dotsc,C_m$.
Each clause $C_i$ is the disjunction of three literals, $C_i = l_i^1 \vee l_i^2 \vee l_i^3$.

We use $\phi$ to define the group family $\groups$, hypothesis class $\hypotheses$, and labeled dataset $S$ as follows.
\begin{enumerate}
  \item The input domain $\inputs$ consists of $n+m$ points, one per variable and clause.
    We use the same names ($x_1,\dotsc,x_n,C_1,\dotsc,C_m$) for these points as the variables and clauses.

  \item The group family is $\groups := \set{g_1,\dotsc,g_m}$, with one group per clause.

  \item For each clause $C_i = l_i^1 \vee l_i^2 \vee l_i^3$, define $g_i := \set{v(l_i^1), v(l_i^2), v(l_i^3), C_i}$.
    Also, define $\hypotheses_i := \hypotheses_i^1 \cup \hypotheses_i^2 \cup \hypotheses_i^3$, where for each $t \in \set{1,2,3}$, $\hypotheses_i^t$ contains all possible hypotheses $h \colon \inputs \to \signs$ satisfying the following:
    \begin{multicols}{2}
      \begin{enumerate}
        \item $h(v(l_i^t)) = y(l_i^t)$;
        \item $h(v(l_i^s)) = - y(l_i^s)$ for $s \neq t$;
        \item $h(C_i) = +1$;
        \item $h(C_j) = -1$ for all $j \neq i$.
      \end{enumerate}
    \end{multicols}
  \item The overall hypothesis class is $\hypotheses := \hypotheses_1 \cup \dots \cup \hypotheses_m$.

  \item The labeled dataset is $S := ((C_1,1),\dotsc,(C_m,1))$.

\end{enumerate}
Soundness and completeness of the reduction are immediate,
and a description of $(\groups, \hypotheses, S)$ can be produced from $\phi$ in $\poly(n)$ time.

\paragraph{Efficiency of searching for a consistent hypothesis.}

We claim that it is easy to search for a consistent hypothesis in the hypothesis class $\hypotheses$ constructed by the above reduction.
(This holds even for the hypothesis class derived from a CNF formula with no clause width restriction.)

Recall that $\hypotheses$ is the (disjoint) union of $\hypotheses_1,\dotsc,\hypotheses_m$.
We first show how to search for a hypothesis in $\hypotheses_j$ (for a fixed $j \in \set{1,\dotsc,m}$) that is consistent with any set of labeled example $S' \subseteq (\inputs \setminus \set{C_1,\dotsc,C_m}) \times \signs$.
Define the literals $\ell_{(x,-1)} = \neg x$ and $\ell_{(x,+1)} = x$, and define the term $T$ to be the conjunction of literals $\ell_{(x,y)}$ for $(x,y) \in S'$.
Then, there is a hypothesis $h \in \hypotheses_j$ consistent with $S'$ if and only if there is a truth assignment such that: $T$ is satisfied, and $C_j$ has exactly one literal that evaluates to "true".
To search for such a hypothesis: construct a partial truth assignment that satisfies $T$; if such a partial assignment exists, then check if it can be extended to satisfy exactly one of the literals in $C_j$.
This can be performed with a linear scan over $T$ and $C_j$.

To see that it is easy to search for a hypothesis in $\hypotheses$ that is consistent with a collection of labeled examples $S \subseteq \inputs \times \signs$, it suffices to explain how to determine which $\hypotheses_j$ to search: this ultimately hinges upon which examples of the form $(C_i,y)$ are in $S$.
If there are no examples of the form $(C_i,y)$ in $S$, then the search is unrestricted.
If $(C_i,+1) \in S$, then the search excludes all $\hypotheses_j$ for $j \neq i$.
If $(C_i,-1) \in S$, then the search excludes $\hypotheses_i$.
Of course, it is possible that all hypotheses are ultimately excluded, in which case there is clearly no consistent hypothesis.

\paragraph{Implications.}

The reduction above shows that it is $\NP$-hard to find a $c \in \concepts$ consistent with a collection of labeled examples, even if provided an oracle for finding hypotheses in $\hypotheses$ consistent with any given $\poly(N)$-many labeled examples from $\inputs \times \signs$, and even if $\card{\groups} = \poly(N)$, where $N$ represents the dimension or description length of inputs, hypotheses, and groups.

\section{Improper multi-group learning under group-realizability}
\label{sec:improper}

This computational intractability from \Cref{sec:computation} is specific to "proper" multi-group learning under group-realizability; it only applies to situations where one seeks to find a classifier from $\concepts$.
As previously mentioned, the intractability can be subverted with improper learning, at least in some situations (including the specific scenario from the reduction in \Cref{sec:computation}) by using the approach of \citet{tosh2021simple}:
\begin{itemize}
  \item Let $\bfS$ be $n$ i.i.d.~labeled examples from the distribution $D = (P,c^*)$.

  \item For each $g \in \groups$, let $\hat h_g \in \hypotheses$ be any hypothesis consistent with the first $n/2$ examples in $\bfS$.

  \item Run \citeauthor{tosh2021simple}'s Algorithm~2 with group family $\groups$, hypotheses $\Set{ \hat h_g \given g \in \groups }$, and learning rate $\eta=1/2$ on the last $n/2$ examples in $\bfS$, to obtain the final classifier $f$.
    This algorithm is a specific instantiation of an online learning algortihm of \citet{blum2007external} combined with online-to-batch conversion.

\end{itemize}

The classifier $f$ output in the last step is randomized, although a deterministic classifier can be easily obtained with a simple modification to their algorithm (specifically, replacing the algorithm of~\citet{blum2007external} with a suitable variant of \citet{littlestone1994weighted}'s Weighted Majority).
In either case, the salient point here is that $f$ is not selected from $\concepts$; there is no attempt to ensure that hypotheses assigned to groups (from the second step above) "agree" on regions of intersection, so the intractability results from \Cref{sec:computation} are not applicable.
Instead, these hypotheses are combined in an ensemble classifier using online learning and online-to-batch conversion.

As mentioned in \Cref{sec:sample}, the sample size requirement of this algorithm is given in \Cref{eq:sample_complexity_cardinality}, which is comparable to that of ERM over $\concepts$ when $\log(\card{\groups}) \lesssim \groupdim \log(1/\epsilon)$.
This algorithm runs in polynomial-time whenever $\groups$ has polynomial cardinality and finding consistent hypotheses from $\hypotheses$ can be done in polynomial-time.
These aforementioned conditions hold for the $(\groups,\hypotheses)$ constructed in the reduction from \Cref{sec:computation}.
It is in these scenarios that computational intractability is subverted by improper learning.

\section{Discussion and future directions}

The statistical efficiency of ERM over the rich class $\concepts$ is a remarkable phenomenon, and it seems worthy of further investigation in other settings, including general (agnostic) multi-group learning with a suitable relaxation of $\concepts$.
In our setting, ERM over $\concepts$ is computationally intractable, but an ensemble method sometimes gets around the intractability without a statistical cost.
This is reminiscent of convex relaxation and other ways improper learning offer computational speed-ups.
It would be interesting to understand if these other approaches are also applicable in our setting.

A problem left open is to find a general oracle-efficient multi-group learning algorithm that achieves the sample complexity from \Cref{eq:sample_complexity_vc} in the group-realizable setting. 
One possible line of attack is to leverage recent progress on oracle-efficient algorithms in related settings~\citep{deng2024groupwise,okoroafor2025near}.

\acks{%
  We acknowledge support from the ONR under grant N00014-24-1-2700.
  SD also acknowledges the support of the Avanessians Doctoral Fellowship for Engineering Thought Leaders and Innovators in Data Science.
  This work grew out of discussions during the ``Modern Paradigms in Generalization'' program at the Simons Institute for the Theory of Computing, Berkeley in 2024.%
}

\bibliography{bib}

\end{document}